\let\Ginclude@graphics\@org@Ginclude@graphics 
\newcommand{\RR}{\mathbb{R}}
\newcommand{\EE}{\mathbb{E}}
\newcommand{\bx}{\bm{x}}
\newcommand{\bfm}{\mathbf{m}}
\newcommand{\bv}{\mathbf{v}}
\newcommand{\bX}{\bm{X}}
\newcommand{\bu}{\bm{u}}
\newcommand{\br}{\bm{r}}
\newcommand{\bi}{\begin{enumerate}}
\newcommand{\ei}{\end{enumerate}}
\title[Dynamic Behavior of Adam]{A Qualitative Study of the Dynamic Behavior for Adaptive Gradient Algorithms}
\thanks{The first two authors contributed equally.}\\
\begin{document}

\maketitle

\begin{abstract}%
The dynamic behavior of RMSprop and Adam algorithms is studied through a combination of careful numerical experiments and theoretical explanations. Three  types of qualitative features are observed  in the training loss curve: fast initial convergence,  oscillations, and large spikes in the late phase. The sign gradient descent (signGD) flow, which is the limit of Adam when taking the learning rate to $0$ while keeping the momentum parameters fixed, is used to explain the fast initial convergence. For the late phase of Adam, three different types of qualitative patterns are observed depending on the choice of the hyper-parameters: oscillations, spikes, and divergence. In particular, Adam converges much smoother and even faster when the values of the two momentum factors are close to each other. This observation is particularly important  for scientific computing tasks, for which the training process usually proceeds into the high precision regime.
\end{abstract}

\begin{keywords}%
Dynamical behavior; Adaptive gradient algorithm; Sign gradient descent; Adam optimizer.
\end{keywords}

\section{Introduction}
Adaptive gradient algorithms \citep{duchi2011adaptive, Tieleman2012, kingma2014adam}, in particular  RMSprop \citep{Tieleman2012} and Adam \citep{kingma2014adam}, have demonstrated superior performance in training modern machine learning models, e.g. deep neural networks.  Distinguished from the vanilla gradient descent (GD) or  stochastic gradient descent (SGD),  adaptive gradient algorithms use a coordinate-wise scaling of the update direction. The scaling factors are adaptively determined by using the history of past gradients \citep{duchi2011adaptive}, which makes the understanding and analysis of these algorithms much more challenging.

Recent theoretical efforts \citep{reddi2018convergence,zhou2018convergence,xie2020linear,li2019convergence, chen2018convergence} have  focused on establishing the convergence of adaptive gradient algorithms. However, these results are still unsatisfactory, since they cannot explain any of the particular features of these adaptive gradient algorithms. Moreover, all these results require  taking the limit that the learning rate $\eta_t$ goes to zero, e.g.   $\eta_t = 1/\sqrt{t}$.  However, in practice, one usually starts with a large learning rate and only decays the learning rate several times during the training process. For the most of iterations, the learning rate is fixed.  So it is interesting to see what happens when the learning rate is fixed. 
Figure~\ref{fig: loss} shows the dynamical behavior of full-batch Adam with a fixed learning rate for one classification problem and one regression problem, respectively. For both cases, one can see that the training curve does not behave monotonically even for this full batch setting. Large spikes keep repeatedly appearing in the late phase of the training. 

These spikes may not cause serious problems for typical computer vision and NLP tasks. For these applications, the data are either highly noisy or very limited, the learning is mainly dominated by the generalization gap (as shown in the left panel of Figure \ref{fig: loss}). Typically, a training loss ~$10^{-2}\sim 10^{-3}$ is sufficient to select a good model.
However, for many scientific computing tasks, such as fitting a target function and numerically solving PDEs, we are more interested in high-precision solutions. For these problems, the training data are relatively clean and easy to obtain, which reduces the risk of overfitting. Hence, lower training loss is always desired. The large spikes in the late phase make it difficult to pick a good stopping time, as shown in the right panel of Figure \ref{fig: loss}.

\begin{figure}[!h]
    \centering
    \includegraphics[width=0.4\columnwidth]{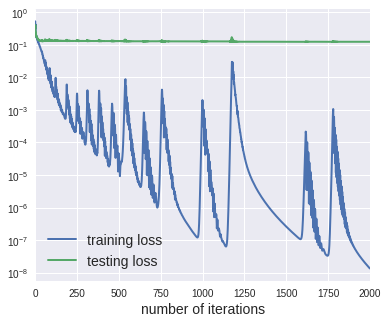}
    \includegraphics[width=0.4\columnwidth]{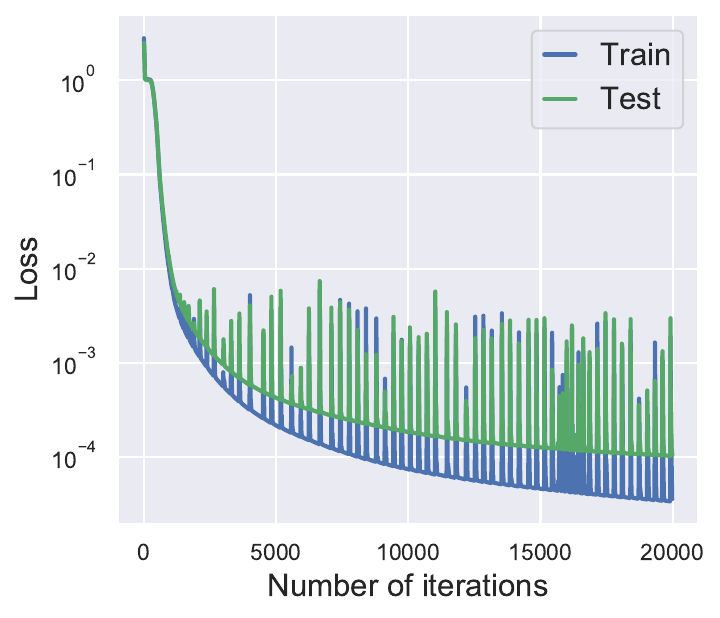}
    \vspace{-4mm}
    \caption{\small The training curves of full batch Adam. For both cases,  the learning rate is fixed to be $0.001$ and $(\beta_1, \beta_2)=(0.9, 0.999)$, the default values in PyTorch and TensorFlow.  \textbf{Left:} Classify CIFAR-10 dataset with a fully-connected neural networks. The network has $3$ hidden layers with widths 256-256-128. 
    $2$ classes are picked from CIFAR-10  with $1000$ images in each class. Square loss function is used. \textbf{Right:} Fit the target function $f^*(\bx)=\sum_{k=1}^5 \sin(2\pi x_k)$ for $\bx\in [0,1]^5$ with a fully connected network, whose architecture is 5-100-100-1. $2000$ points are uniformly sampled from $\text{Unif}([0,1]^5)$ to form the training set }.
    \label{fig: loss}
\end{figure}

In addition to the learning rate, adaptive gradient algorithms also use extra hyper-parameters such as the second-order momentum factor for RMSprop and the first and second order momentum factors for Adam. Default values of these hyper-parameters are provided in mainstream packages (e.g. $\beta_1$=0.9, $\beta_2$=0.999 for Adam in PyTorch and TensorFlow),  which are usually tuned on the classification problems with cross-entropy loss \citep{kingma2014adam}. However, they are not necessarily optimal for regression and scientific computing problems, where the quadratic loss is used, e.g., the default values can cause the large spikes as shown in Figure \ref{fig: loss}.
One objective of this paper is to carry out a comprehensive study of how the choice of these hyper-parameters affects the dynamical behavior.

\paragraph{Contributions}
In this paper, we provide well-designed experiments to demystify the dynamic behavior of adaptive gradient algorithms. Specifically, our contributions are summarized as follows.

\begin{enumerate}
    \item We identify three types of typical phenomena in the training process of these adaptive algorithms: initial fast convergence (sometimes even super-linear), small oscillations, and large spikes in the late phase.
    
    \item For RMSprop and Adam, if the learning rate decreases to zero while the momentum parameters are fixed, the algorithms converge to the signGD flow. For signGD flow, we prove the finite-time convergence for objective functions that satisfy the Polyak-Lojasiewicz (PL) \citep{polyak1963gradient} condition. These arguments together provide a partial explanation of the fast initial convergence of RMSprop and Adam, which could be the one of  reasons behind the popularity of these algorithms.
    
    \item We show that the large spikes are caused by some instabilities of the algorithm at stationary points. For RMSprop on simple objective functions, we explicitly write down the limiting oscillating solution. For Adam, we classify the behavior into three different patterns in
    the space of the two momentum factors: {the spike regime, the oscillation regime, and the divergence regime.} Empirical results show that training is most stable in the ``oscillation regime'',in particular when $\beta_1\approx\beta_2$.
\end{enumerate}

Throughout this paper, all the activation functions are $\sigma(t)=\max(0,t)$, unless explicitly specified.
The quadratic loss is used for all the experiments, including the classification problems.
Other experimental details are described in the caption of each figure.

To make the notations more consistent, from now on we use $\alpha$ to denote the second-order momentum factor in both Adam and RMSprop, and use $\beta$ to denote the first-order momentum in Adam. The conventional notations $\beta_1$ and $\beta_2$  for Adam will become $\beta$ and $\alpha$, respectively. For vectors $\bu$ and $\bv$, operations such as $\bu^2$, $\sqrt{\bu}$, $\bu/\bv$, and $|\bu|$ are 
understood  to be element-wise.  

\section{Preliminaries}\label{sec:2}
\subsection{Adaptive gradient algorithms}
Adaptive gradient algorithms are a family of optimization algorithms that use a coordinate-wise scaling of the update direction (gradient or gradient with momentum)  according to the history of gradients. Many adaptive algorithms can be cast to the following form~\citep{da2018general},

\begin{equation}\label{eqn: adap_general}
\begin{aligned}
    \bfm_{t+1} &= h_t\nabla f(\bx_t) + r_t \bfm_t \\
    \bv_{t+1}  &= p_t (\nabla f(\bx_t))^2 + q_t \bv_t\\
    \bx_{t+1}  &= \bx_t - \eta_t \frac{\bfm_{t+1}}{\sqrt{\bv_{t+1}}+\epsilon},
\end{aligned}
\end{equation}
with different choice of $h, r, p, q$. In~\eqref{eqn: adap_general}, $h, r, p, q$ are scalar functions of $t$. For example, Adagrad~\citep{duchi2011adaptive} is recovered when $h, p, q=1$ and $r=0$, and RMSprop corresponds to the case when $h=1$, $r=0$, $p=1-\alpha$ and $q=\alpha$ for some constant $\alpha\in(0,1)$. Viewed from the dynamics of $\bx_t$ alone, adaptive gradient algorithms usually have a ``memory effect'' due to the momentum terms. The strength of the memory depends on the momentum factors ($h_t, r_t, p_t, q_t$) and the learning rate $\eta_t$.
Because of their efficiency in training neural network models, these  algorithms are extensively used. We refer readers  to~\citep{ruder2016overview} for a more thorough review of existing adaptive algorithms.

In this paper, we focus on RMSprop and Adam --- the two algorithms that are most widely used by practitioners. The discrete update rules of these algorithms are
\begin{itemize}
\item {\bf RMSprop:}
\begin{equation} \label{eqn: rmsprop}
\begin{aligned}
  \bv_{t+1} &= \alpha \bv_{t} + (1-\alpha)(\nabla f(\bx_t))^2 \\
  \bx_{t+1} &= \bx_t - \eta\frac{\nabla f(\bx_t)}{\sqrt{\bv_{t+1}}+\epsilon}
\end{aligned}
\end{equation}

\item {\bf Adam:}
\begin{equation} \label{eqn: adam}
    \begin{aligned}
  \bv_{t+1} &= \alpha \bv_{t} + (1-\alpha) (\nabla f(\bx_t))^2\\
  \bfm_{t+1} &= \beta \bfm_{t} + (1-\beta) \nabla f(\bx_t)\\
  \bx_{t+1} &= \bx_t - \eta\frac{\bfm_{t+1}/(1-\beta^{t+1})}{\sqrt{\bv_{t+1}/(1-\alpha^{t+1})}+\epsilon}
\end{aligned}
\end{equation}

\end{itemize}
In~\eqref{eqn: rmsprop} and~\eqref{eqn: adam}, $\epsilon$ is a small constant used to avoid the division by $0$. It is usually taken to be $10^{-8}$. 

In this paper, we mainly focus on the full batch setting, i.e., $\nabla f(\bx_t)$ is the full gradient, for which the dynamical behavior is already rather complex. We also show that our  observations of the full batch setting also apply to the stochastic setting if the batch size is relatively large.  
The systematic investigation of the influence of batch size is left to future work.

\subsection{Continuous-time limits}
RMSProp and Adam can be studied by considering the limiting ordinary differential equations (ODE) obtained by taking the learning rate $\eta$ to $0$. However, 
different limiting ODEs are obtained when  the hyper-parameters are scaled differently. 

If the momentum factors are kept fixed, then as $\eta\rightarrow0$, the memory effect diminishes, because in each discrete iteration we lose the same amount of memory but one iteration 
occupies a shorter and shorter time. In this case, the continuous-time  limit for both RMSprop and Adam are the following dynamics
\begin{equation}\label{eqn: ode1}
    \dot{\bx} = -\frac{\nabla f(\bx)}{|\nabla f(\bx)|+\epsilon}.
\end{equation}
Since $\epsilon$ is a small value, this dynamics is close to the signGD flow:
\begin{equation}\label{eqn: continuous-time-signGD}
    \dot{\bx} = -\textrm{sign}(\nabla f(\bx)).
\end{equation}

\begin{proposition}\label{thm: limit1}
Assume that $\nabla f$ is bounded and Lipschitz continuous, i.e. there exists constants $M$ and $L$ such that $\|\nabla f(\bx_1)\|\leq M$ and $\|\nabla f(\bx_1)-\nabla f(\bx_2)\|\leq L\|\bx_1-\bx_2\|$ hold for any $\bx_1$ and $\bx_2$. Let $\{\bx_k^\eta\}$, $k=0,1,2,\cdots$ be the solution given by algorithm~\eqref{eqn: rmsprop} or~\eqref{eqn: adam} starting from $\bx_0$, $\bfm_0$ and $\bv_0\geq0$, with learning rate $\eta$ and some fixed $\alpha, \beta\in(0,1)$ and $\epsilon>0$. Let $\bX^\eta(\cdot)$ be a piece-wise constant function of $t\in[0,\infty)$ that satisfies
\begin{equation*}
    \bX^\eta(t) = \bx_k^\eta,\quad for  \,t \in[k\eta, (k+1)\eta).
\end{equation*}
In addition, let $\bx(\cdot)$ be the solution of~\eqref{eqn: ode1} initialized from $\bx_0$. Then, for any $T>0$, we have
\begin{equation}
    \lim_{\eta\rightarrow0}\sup_{t\in[0,T]}\|\bX^\eta(t)-\bx(t)\| = 0.
\end{equation}
\end{proposition}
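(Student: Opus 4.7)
The plan is to treat the discrete recursion as a perturbation of the forward Euler scheme for \eqref{eqn: ode1}. Writing $G(\bx) := -\nabla f(\bx)/(|\nabla f(\bx)|+\epsilon)$, the strategy is to show that the effective increment in one step of either algorithm is $\eta G(\bx_k)$ plus an error that vanishes with $\eta$, and then close with a discrete Gronwall inequality. Observe first that $G$ is Lipschitz: the scalar map $g\mapsto g/(|g|+\epsilon)$ has derivative bounded by $1/\epsilon$, so composing coordinate-wise with the $L$-Lipschitz $\nabla f$ gives a Lipschitz constant $L_G$ of order $L/\epsilon$. In particular \eqref{eqn: ode1} has a unique solution $\bx(\cdot)$.

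First I would derive a uniform per-step bound. Unrolling the $\bv_k$ recursion shows that $\bv_k$ is a convex combination of $\bv_0\ge 0$ and past squared gradients, each coordinate bounded by $M^2$; together with $\|\bfm_k\|\le M$ and the fact that the Adam bias-correction factors $1/(1-\alpha^{k+1})$ and $1/(1-\beta^{k+1})$ lie in the bounded interval $[1,\,1/(1-\max(\alpha,\beta))]$, this yields $\|\bx_{k+1}-\bx_k\|\le C\eta$ for some $C=C(M,\alpha,\beta,\epsilon)$, and hence $\|\bx_j-\bx_k\|\le C|k-j|\eta$.

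The technical heart of the proof is to show that for any $\tau>0$, uniformly for indices $k$ with $k\eta\in[\tau,T]$,
\begin{equation*}
\bigl\|\bv_{k+1}-(\nabla f(\bx_k))^2\bigr\|\to 0,\qquad \bigl\|\bfm_{k+1}/(1-\beta^{k+1})-\nabla f(\bx_k)\bigr\|\to 0
\end{equation*}
as $\eta\to 0$ (with the analogous bias-corrected statement for $\bv$ in the Adam case). The key decomposition is
\begin{equation*}
\bv_{k+1}-(\nabla f(\bx_k))^2=\alpha^{k+1}\bigl[\bv_0-(\nabla f(\bx_k))^2\bigr]+(1-\alpha)\sum_{j=0}^{k}\alpha^{k-j}\bigl[(\nabla f(\bx_j))^2-(\nabla f(\bx_k))^2\bigr].
\end{equation*}
The first term is $O(\alpha^{\tau/\eta})$ and vanishes. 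For the second, the bound $|(\nabla f(\bx_j))^2-(\nabla f(\bx_k))^2|\le 2ML\cdot C(k-j)\eta$ and the elementary identity $\sum_{i\ge 0}i\alpha^{i-1}=1/(1-\alpha)^2$ give an overall bound of $O(\eta/(1-\alpha))$. The same reasoning handles $\bfm$, so the actual update direction converges to $G(\bx_k)$ uniformly on $k\eta\in[\tau,T]$.

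Finally I would assemble everything via discrete Gronwall. With $\Delta_k=\bx_k-\bx(k\eta)$ and writing the discrete step as $\eta(G(\bx_k)+\delta_k^\eta)$, the identity $\int_{k\eta}^{(k+1)\eta}G(\bx(s))\,ds=\eta G(\bx(k\eta))+O(\eta^2)$ gives
\begin{equation*}
\|\Delta_{k+1}\|\le(1+\eta L_G)\|\Delta_k\|+\eta\|\delta_k^\eta\|+O(\eta^2),
\end{equation*}
so Gronwall on $[\tau,T]$ yields $\sup_{k\eta\in[\tau,T]}\|\Delta_k\|\le e^{L_GT}(\|\Delta_{\lceil\tau/\eta\rceil}\|+o(1))$ by the previous step. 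On $[0,\tau]$, the uniform bounds $\|\bx_k-\bx_0\|\le C\tau$ and $\|\bx(t)-\bx_0\|\le\tau$ together give $\|\Delta_k\|=O(\tau)$. Choosing $\tau$ small and then $\eta$ small makes the total error arbitrarily small on $[0,T]$.

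The main obstacle I anticipate is the second step: the averaging-error estimate requires the per-step bound to be in hand and, for Adam, the bias-correction denominator $1-\beta^{k+1}$ degenerates as $k\eta\to 0$, so the nonuniformity near $t=0$ must be absorbed by the $[0,\tau]$ transient analysis rather than by the Gronwall inequality. This clean decoupling between the initial segment and the bulk interval is what makes the argument go through.
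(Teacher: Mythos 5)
Your proposal is correct and follows essentially the same route as the paper's proof: a per-step $O(\eta)$ displacement bound, an exponential-forgetting estimate showing that $\bv_{k+1}$ (and the bias-corrected $\bfm_{k+1}$) track $(\nabla f(\bx_k^\eta))^2$ and $\nabla f(\bx_k^\eta)$ up to errors of order $\eta/(1-\alpha)$ plus a memory term, and a discrete Gronwall comparison with the ODE, with the initial transient handled separately. The only difference is bookkeeping: you cut the transient at a fixed time $\tau$ (so the $\bv_0,\bfm_0$ memory decays like $\alpha^{\tau/\eta}$ and the $O(\tau)$ initial error is absorbed by taking $\tau$ small before $\eta$), whereas the paper cuts at a fixed step count $K$, keeping an explicit $\alpha^{K/2}$ tail and a vanishing $O(K\eta)$ initial error---both versions close in the same way.
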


The proof of the proposition is given in the appendix. Figure~\ref{fig: adap_signgd} provides numerical evidence that RMSprop and Adam are close to signGD in a finite time interval when $\eta$ is small while $\alpha$ and $\beta$ are fixed. The closeness between signGD and RMSprop is also shown in Figure~\ref{fig: signgd_rmsprop} for a synthetic objective function.

Note that using the signGD method to train neural networks can date back to \citep{riedmiller1992rprop}, termed as Rprop algorithm. RMSProp was initially proposed as a stochastic version  of Rprop \citep{Tieleman2012}. This type of connection was also investigated for Adam in  \citep{balles2018dissecting}. In contrast, Proposition \ref{thm: limit1} shows another type of connection:  RMSProp/Adam converges to signGD flow in the limit $\eta\to 0$. This connection does not rely on the stochastic approximation and has not been explored before. 

\begin{figure}[!h]
    \centering
    \includegraphics[width=0.4\columnwidth]{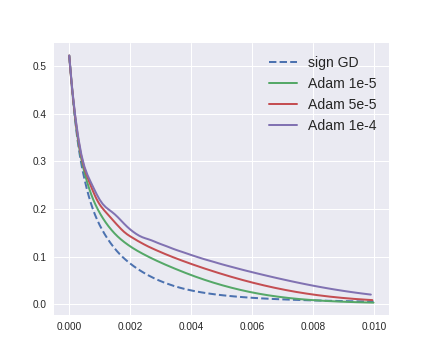}
    \includegraphics[width=0.4\columnwidth]{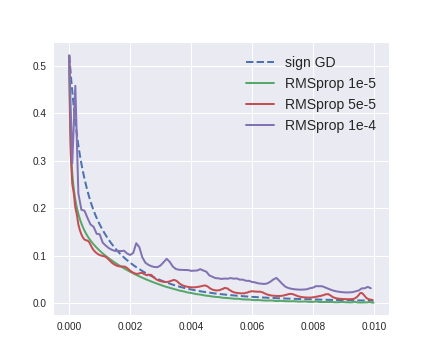}
    \vspace{-4mm}
    \caption{\small The comparison of the training loss curve between signGD  and Adam/RMSprop with different learning rates for the early phase. The x-axis is the time (learning rate$\times$number of iterations) and y-axis denotes the training loss. 
    For Adam, $\beta=0.9$ and $\alpha=0.999$; for RMSprop $\alpha=0.99$. Learning rate of signGD is $10^{-5}$. Experiments conducted on a fully-connected neural network with three hidden layers with widths be $256-128-64$. The training data is taken from $2$ classes of CIFAR10 with $1000$ samples per class.}
    \label{fig: adap_signgd}
\end{figure}

On the other hand, if we want to keep the strength of the memory effect fixed, we have to let $\alpha$ and $\beta$ go to $1$ when $\eta$ tends to $0$. Specifically, let $\alpha=1-a\eta$ and $\beta=1-b\eta$, with $a$ and $b$ being positive constants. Then, it is easy to show that the trajectories of~\eqref{eqn: rmsprop} and~\eqref{eqn: adam} converge to the following ODEs~\eqref{eqn: rmsprop_ode} and~\eqref{eqn: adam_ode}, respectively.

\begin{itemize}
\item {\bf RMSprop flow:}
\begin{equation}
\begin{aligned}
  \dot{\bv} &= a(\nabla f(\bx)^2-\bv) \\
  \dot{\bx} &= -\frac{\nabla f(\bx)}{\sqrt{\bv}+\epsilon} \label{eqn: rmsprop_ode}
\end{aligned}
\end{equation}

\item {\bf Adam flow:}
\vspace{-2mm}
\begin{equation}
\begin{aligned}
  \dot{\bv} &= a(\nabla f(\bx)^2-\bv) \\
  \dot{\bfm} &= b(\nabla f(\bx)-\bfm) \\
  \dot{\bx} &= -\frac{(1-e^{-bt})^{-1}\bfm}{\sqrt{(1-e^{-at})^{-1}\bv}+\epsilon} \label{eqn: adam_ode}
\end{aligned}
\end{equation}
\end{itemize}

The following proposition is a simplification of Theorem 3.2 in~\citep{barakat2018convergence}. Note that this result actually holds for stochastic RMSprop and Adam algorithms, but we will focus on the full-batch setting.

\begin{proposition}\label{thm: limit2}
Under the same condition of $f$ in Proposition~\ref{thm: limit1}, let $\{\bx_k^\eta\}$, $k=0,1,2,\cdots$ be the solution given by algorithm~\eqref{eqn: rmsprop} starting from $\bx_0$ and $\bv_0=0$, with learning rate $\eta$ and $\alpha=1-a\eta$ for a fixed constant $a>0$. Let $\bX^\eta(\cdot)$ be a piece-wise constant vector function of $t\in[0,\infty)$ that satisfies
\begin{equation*}
    \bX^\eta(t) = \bx_k^\eta,\quad \text{for } t \, \in[k\eta, (k+1)\eta).
\end{equation*}
In addition, let $\bx(\cdot)$ be the solution of~\eqref{eqn: rmsprop_ode} initialized from $\bx_0$ and $\bv_0\geq0$. Then, for any $T>0$, we have
\begin{equation}
    \lim_{\eta\rightarrow0}\sup_{t\in[0,T]}\|\bX^\eta(t)-\bx(t)\| = 0.
\end{equation}

Similarly, if $\alpha=1-a\eta$ and $\beta=1-b\eta$ for some constants $a,b>0$, then the same convergence statements hold for
the solutions of~\eqref{eqn: adam} and~\eqref{eqn: adam_ode}.
\end{proposition}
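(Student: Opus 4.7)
The plan is to treat each discrete scheme as a forward-Euler discretization of the corresponding ODE and then invoke a discrete Gronwall comparison on $[0,T]$. For RMSprop, the second-moment update rearranges as $\bv_{k+1} = \bv_k + a\eta\bigl((\nabla f(\bx_k))^2 - \bv_k\bigr)$, and the parameter update reads $\bx_{k+1} = \bx_k - \eta\,\nabla f(\bx_k)/(\sqrt{\bv_{k+1}}+\epsilon)$. The only discrepancy with a textbook Euler step for~\eqref{eqn: rmsprop_ode} is that the denominator uses $\bv_{k+1}$ in place of $\bv_k$; since $\bv_{k+1}-\bv_k=O(\eta)$ and the map $\bv\mapsto 1/(\sqrt{\bv}+\epsilon)$ is smooth when $\bv$ stays in a compact set away from catastrophic cancellation, this contributes only an $O(\eta^2)$ local truncation error. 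The Adam scheme is rewritten analogously, with the additional linear update for $\bfm$ and with the debiasing factors $(1-\alpha^{k+1})^{-1}$, $(1-\beta^{k+1})^{-1}$ handled as in Step 3 below.

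First I would establish a priori bounds. Because $\|\nabla f\|\le M$ and $1-\alpha=a\eta$, $1-\beta=b\eta\in(0,1)$ for $\eta$ small enough, a one-line induction yields $0\le \bv_k\le M^2$ component-wise and $\|\bfm_k\|\le M$, uniformly in $k$ and $\eta$; the same bounds hold for the ODE variables. Consequently the denominator $\sqrt{\bv}+\epsilon$ stays in $[\epsilon,\,M+\epsilon]$, and the right-hand sides of~\eqref{eqn: rmsprop_ode} and (away from $t=0$) of~\eqref{eqn: adam_ode} are Lipschitz in $(\bx,\bv,\bfm)$ on the relevant compact set, with a Lipschitz constant depending only on $M$, $L$ and $\epsilon$.

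Next I would track the debiasing factors. Writing $t_k=k\eta$ and $\alpha=1-a\eta$, the Taylor expansion $\log(1-a\eta)=-a\eta+O(\eta^2)$ gives
\begin{equation*}
    1-\alpha^{k+1}=1-e^{-at_k}+O(\eta)\qquad\text{uniformly for }t_k\in[0,T],
\end{equation*}
and likewise for $\beta$. Together with Step 2 this shows that the bias-corrected discrete quantities $\bfm_{k+1}/(1-\beta^{k+1})$ and $\bv_{k+1}/(1-\alpha^{k+1})$ match their ODE counterparts up to $O(\eta)$ inside the compact set. Given Steps 1--3, introducing piecewise-constant interpolants $\bX^\eta$, $\bV^\eta$ (and, for Adam, an interpolant of $\bfm$) and running a standard discrete Gronwall estimate on the errors against $(\bx(t),\bv(t),\bfm(t))$ yields per-step error $O(\eta^2)$, hence total error $O(\eta)$ on the $T/\eta$ steps, giving the desired uniform convergence.

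The main obstacle will be the $t\to 0^+$ behavior for Adam, where $(1-e^{-at})^{-1}$ and $(1-e^{-bt})^{-1}$ blow up, so the ODE right-hand side is only well-defined by continuity. However, since $\bfm(0)=0=\bv(0)$, the ratios $\bfm(t)/(1-e^{-bt})$ and $\bv(t)/(1-e^{-at})$ extend continuously to $t=0$ with limits $\nabla f(\bx_0)$ and $(\nabla f(\bx_0))^2$ respectively, and a short direct computation shows the extended field is locally Lipschitz. The analogous non-degeneracy holds on the discrete side: because $\bfm_0=\bv_0=0$, both $\bfm_k$ and $\bv_k$ grow linearly in $t_k$ for small $k$, while $1-\beta^{k+1}$ and $1-\alpha^{k+1}$ do the same, keeping the bias-corrected ratios uniformly bounded. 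Matching the two expansions near $t=0$ extends the Gronwall estimate down to the initial time and completes the argument.
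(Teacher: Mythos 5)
First, note that the paper itself does not prove this proposition; it is quoted as a simplification of Theorem~3.2 of Barakat--Bianchi (the appendix only proves Proposition~\ref{thm: limit1}), so your attempt can only be judged on its own merits. Your overall strategy --- read the updates as a forward-Euler scheme for~\eqref{eqn: rmsprop_ode}/\eqref{eqn: adam_ode}, get a priori bounds $0\le\bv_k\le M^2$, $\|\bfm_k\|\le M$, expand the debiasing factors, and close with a discrete Gronwall estimate --- is the natural one. But there is a genuine gap at the central technical point: the claim that the right-hand sides are Lipschitz in $(\bx,\bv,\bfm)$ ``on the relevant compact set'' is false. The map $\bv\mapsto 1/(\sqrt{\bv}+\epsilon)$ has unbounded derivative as $\bv\to 0$, and $\bv_0=0$ is exactly the prescribed initialization (for Adam the debiased $\bv/(1-e^{-at})$ tends to $\nabla f(\bx_0)^2$, which may also have vanishing components), so you cannot stay ``away from catastrophic cancellation''. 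All you get for free is a H\"older-$1/2$ modulus, $\|\sqrt{\bu}-\sqrt{\bv}\|\le\sqrt{\|\bu-\bv\|}$, and a Gronwall recursion with a $\sqrt{\cdot}$ modulus does not close: the comparison ODE $\dot g = C\sqrt{g}$, $g(0)=0$, admits nonzero solutions, so the coupled bounds $E_x\lesssim \int(C E_x+C\sqrt{E_v})+o(1)$, $E_v\lesssim C(E_x+\eta)$ only give $E_x=O(1)$, not $E_x\to 0$, and certainly not the $O(\eta)$ rate you announce. (A related minor slip: replacing $\bv_k$ by $\bv_{k+1}$ in the denominator costs $O(\eta^{3/2})$, not $O(\eta^2)$, per step near $\bv\approx 0$; that still vanishes after summation but again spoils the claimed rate.)

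To repair the argument you need to exploit structure rather than a generic Lipschitz/Gronwall bound. One way: with $\bv_0=0$, both $\bv_k=(1-\alpha)\sum_{i<k}\alpha^{k-1-i}\nabla f(\bx_i^\eta)^2$ and $\bv(t)=a\int_0^t e^{-a(t-s)}\nabla f(\bx(s))^2\,ds$ are exponentially weighted averages of squared gradients along the two trajectories, so componentwise $\sqrt{\bv_k}$ and $\sqrt{\bv(t_k)}$ are weighted $\ell^2/L^2$ norms of the gradients; the triangle (Minkowski) inequality for those norms, plus the $O(\eta)$ mismatch between the discrete and continuous weights, yields
\begin{equation*}
  \bigl\|\sqrt{\bv_k}-\sqrt{\bv(t_k)}\bigr\|\;\lesssim\; L\sup_{s\le t_k}\|\bX^\eta(s)-\bx(s)\|+C\sqrt{\eta},
\end{equation*}
which is linear in the trajectory error and lets the Gronwall loop close; this plays the same role as the $(a-b)^2\le a^2-b^2$ device in the paper's proof of Proposition~\ref{thm: limit1}. (Alternatively one must argue separately on the region where $\bv$ is small, but a naive threshold split makes the Lipschitz constant blow up like $\delta^{-1/2}$ inside the exponential and does not converge.) Your treatment of the debiasing factors and of the $t\to0^+$ singularity for Adam is qualitatively on the right track, but the claim that the extended field is locally Lipschitz again fails when $\nabla f(\bx_0)$ has zero components, so the same structural argument is needed there. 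As written, the proposal asserts the key estimate rather than proving it, and that assertion is where the actual difficulty of the proposition lies.
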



As can be seen from~\eqref{eqn: rmsprop_ode} and~\eqref{eqn: adam_ode}, the smaller the value of $a$ and $b$, the slower the dynamics of $\bv$ (and $\bfm$), and consequently the slower  the whole dynamics. 
Numerical results in Figure~\ref{fig: speed} confirm this. However, it is worth mentioning that this difference in convergence speed does not manifest at the very beginning of the training process. To understand this, consider the dynamics of Adam~\eqref{eqn: adam_ode} with $\bv_0=\bfm_0=0$ and $\epsilon=0$. When $t\ll 1$, we have  
\begin{align}
\bv_t  &\approx (1-e^{-at})\nabla f(\bx_0)^2, \nonumber\\
\bfm_t &\approx (1-e^{-bt})\nabla f(\bx_0). \nonumber
\end{align}
Hence, we have
\begin{equation*}
    \dot{\bx}\approx-\frac{(1-e^{-bt})^{-1}(1-e^{-bt})\nabla f(\bx)}{\sqrt{(1-e^{-at})^{-1}(1-e^{-at})\nabla f(\bx)^2}}  \textrm{sign}(\nabla f(\bx)) \approx \textrm{sign}(\nabla f(\bx_0)),
\end{equation*}
which shows that the initial speed of $\bx$ does not depend on $a$ and $b$.  Rigorously, we have the following proposition, whose proof is deferred to the appendix. 

\begin{proposition}\label{prop: initial}
Given the same conditions of $f$ in Proposition~\ref{thm: limit1}. Let $(\bx(t), \bfm(t), \bv(t))$ be the solution of~\eqref{eqn: adam_ode} starting from $(\bx_0, 0, 0)$ with $\epsilon=0$. Assume $|[\nabla f(\bx_0)]_i|>c$ for some positive constant $c$, where $[\nabla f(\bx_0)]_i$ means the $i$-th element of $\nabla f(\bx_0)$. Then, for any $\tau$ that satisfies $\tau<\frac{c^3}{32M^2L\sqrt{d}}$, we have
\begin{equation}
    \left\|\dot{\bx}(\tau)+\textrm{sign}(\nabla f(\bx(\tau)))\right\| < \frac{54M^3L\sqrt{d}}{c^4}\tau.
\end{equation}
\end{proposition}

\begin{figure*}[!h]
    \centering
    \includegraphics[width=0.45\columnwidth]{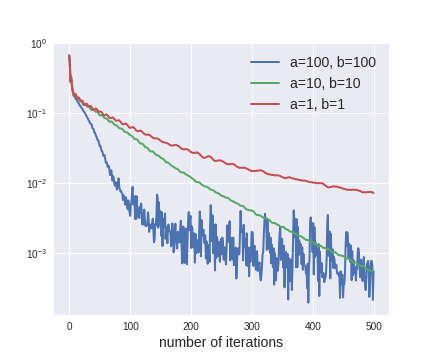}
    \includegraphics[width=0.45\columnwidth]{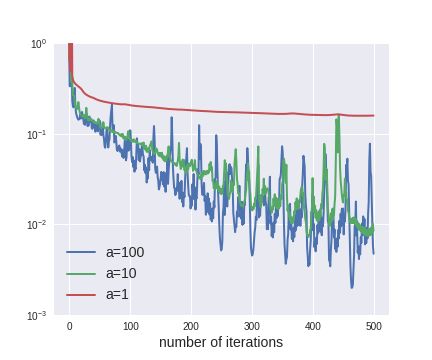}
    \vspace{-4mm}
    \caption{\small How the  values of $a$ and $b$ affect the speed of dynamics.  \textbf{Left:} Adam; \textbf{Right:} RMSprop.  The learning rate is $0.001$ for all the experiments. The model and training data are the same as Figure~\ref{fig: adap_signgd}. 
    One can see that at the early stage of the training (after a very short period from initialization), optimizers with larger $a$ and $b$ converge faster. }
    \label{fig: speed}
\end{figure*}

\section{RMSprop and signGD: Fast convergence and oscillation}

In this section, we focus on RMSprop. Figure~\ref{fig: rmsprop} shows the loss curves and trajectories of RMSprop for a typical
 multi-layer neural network model. There are three obvious features:
\begin{enumerate}
    \item {\bf Fast initial convergence:} the loss curve decreases very fast,  sometimes even super-linearly, at the early stage of the training.
    \item {\bf Small oscillations:} The fast initial convergence is followed by oscillations around the minimum. 
    \item {\bf Large spikes:} Spikes are the sudden increase of the loss values, which are followed by an oscillating recovery. Different from small oscillations, spikes make the loss much larger and the interval between two spikes is also longer. 
\end{enumerate}

We relate the fast initial convergence with the closeness of the RMSprop trajectory to signGD. For the other two features, we attribute them to the instability at the stationary points.

\begin{figure}[!h]
 \centering
 \includegraphics[width=0.4\columnwidth]{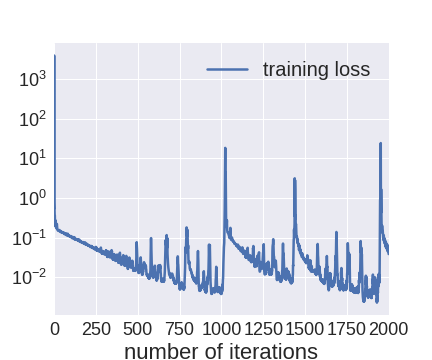}
 \hspace{-5mm}
 \includegraphics[width=0.4\columnwidth]{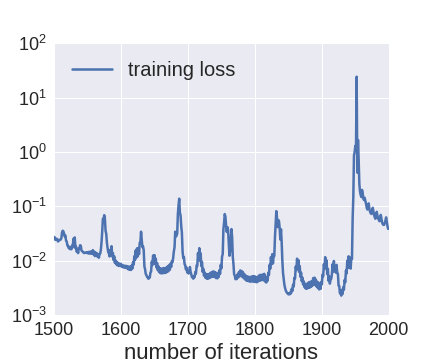}
 \caption{The loss curves of training a neural network model on CIFAR-10 dataset with RMSprop. Model and data the same as Figure~\ref{fig: loss}. The learning rate is 1e-3, and $\alpha=0.99$. $2000$ iterations are run. {\bf Left:} The whole training loss curves {\bf Right:} The training loss of the last $500$ iterations.}
 \label{fig: rmsprop}
\end{figure}

\paragraph{Fast initial convergence} 
As discussed in the last section,  when $\eta$ tends to $0$ while $\alpha$ stays fixed, RMSprop tends to signGD. So the loss curve of RMSprop and signGD align  well during the  initial phase as shown in Figure \ref{fig: adap_signgd}. 
Figure~\ref{fig: signgd_rmsprop} shows the loss curves of both signGD and RMSprop on a quadratic objective function. Their behaviors are similar
--- they both experience fast initial convergence and then the loss stops decreasing. For this reason, we will study the fast initial convergence of RMSprop with the help of signGD. Under the PL condition, the following proposition shows that signGD flow can reach the global minimum in finite time. 

\begin{figure}[!h]
    \centering
    \includegraphics[width=0.35\columnwidth]{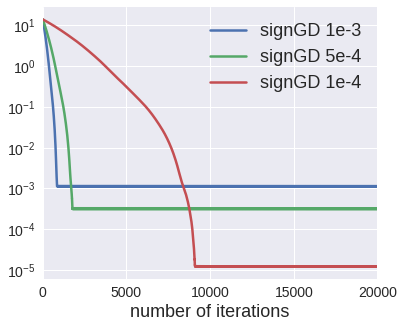}
    \includegraphics[width=0.35\columnwidth]{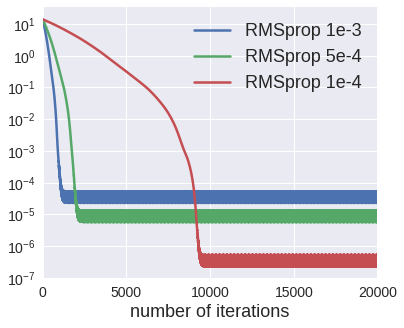}
    \vspace{-5mm}
    \caption{
    \small The loss curves of signGD (\textbf{Left}) and RMSprop (\textbf{Right}) with varying learning rates  for a quadratic objective function $f$. Here, $f(\bx)=\frac{1}{2}\bx^T A\bx$. $A=UU^T$ with $U\in\RR^{10\times 10}$ and $U_{i,j}\stackrel{iid}{\sim} \mathcal{N}(0,1)$. 
   For RMSprop, $\alpha$ is fixed to be $0.9$.
   }
    \label{fig: signgd_rmsprop}
\end{figure}

\begin{proposition}\label{pro: finite-time-stop}
Assume that  the objective function satisfies the Polyak-Lojasiewicz (PL) \citep{polyak1963gradient} condition: $\|\nabla f(\bx)\|_2^2 \geq \mu (f(\bx)-f(\bx^*))$ for any $\bx$. Here $\bx^*=\text{argmin}_{\bx} f(\bx)$.
Let $\bx(t)$ be the solution of the signGD flow \eqref{eqn: continuous-time-signGD}. Then, we have
\begin{equation*}
    f(\bx(t))  - f(\bx^*) \leq \left(\sqrt{f(\bx_0)-f(\bx^*)} - \frac{\sqrt{\mu}}{2}t\right)^2.
\end{equation*}
The signGD flow will stop within $T\leq 2\sqrt{\frac{f(\bx_0)-f(\bx^*)}{\mu}}$.
\end{proposition}

\begin{proof}
We have
\begin{equation*}
\begin{aligned}
  \frac{d}{dt}f(\bx(t)) &= -\langle \textrm{sign}(\nabla f(\bx(t))), \nabla f(\bx(t)) \rangle = -\|\nabla f(\bx(t))\|_1\\ 
  &\leq -\|\nabla f(\bx(t))\|_2 \leq -\sqrt{\mu (f(\bx(t))-f(\bx^*))}.
 \end{aligned}
\end{equation*}
Hence, we have
$
    \frac{d}{dt}\sqrt{f(\bx(t))-f(\bx^*)}\leq -\frac{\sqrt{\mu}}{2},
$
which implies \[
    f(\bx(t)) - f(\bx^*)\leq \left(\sqrt{f(\bx_0)-f(\bx^*)} - \frac{\sqrt{\mu}}{2}t\right)^2.
\]
\end{proof}
 
Consider an one-dimensional objective function $f(x)=\varepsilon x^2$ and the signGD flow starting from $x_0>0$. The signGD flow is given by $\dot{x}(t)=-1$, which will stop at $T_0^*=x_0$. For this example, the PL constant is  $\mu=\|\nabla f(x)\|/f(x)=(2\varepsilon x)^2/(\varepsilon x^2)=4\varepsilon$, which leads to the predicted stopping time $T_0=2\sqrt{x_0^2/(4\varepsilon)}=x_0\varepsilon^{-1/2}$. Consequently, the prediction of Proposition \ref{pro: finite-time-stop} is exact when $\varepsilon=1$ but becomes very loose when $\varepsilon\ll 1$ or $\varepsilon\gg 1$. The intuition is that dynamics of signGD is actually independent of the curvature $\varepsilon$. This is quite different from gradient descent.


\paragraph{Instability at the stationary points}
Intuitively, both the small oscillations and the spikes are caused by the (near) singularity of the dynamics at the stationary points of the objective function. For adaptive gradient methods, at the stationary points, we have $\bv=0$. Hence, if the dynamics is linearized around the stationary point, the Jacobian will have very big eigenvalues (at the order of $O(\frac{1}{\epsilon})$). For example, linearizing continuous RMSprop~\eqref{eqn: rmsprop_ode} around some stationary point $(\bx^*, 0)$ gives
\begin{align*}
    \dot{\bx} &= -\frac{\nabla^2 f(\bx^*)}{\epsilon}(\bx-\bx^*), \\
     \dot{\bv} &= -a\bv.
\end{align*}
The Jacobian of the above linearized dynamics at $(x^*,0)$ is 
\begin{equation}
    \left[\begin{array}{cc}
    -\frac{\nabla^2 f(\bx^*)}{\epsilon} & 0 \\ 0 & -aI \\
    \end{array}\right],
\end{equation}
whose eigenvalues are $-\lambda/\epsilon$ and $-a$, where $\lambda$ is any eigenvalue of the Hessian $\nabla^2 f(\bx^*)$. Therefore, this stationary point is nearly singular. This is not a problem for the continuous dynamics. However, for the discrete dynamics, the stationary point is unstable unless the learning rate is smaller than  $\frac{2\epsilon}{ \lambda_{\max}}\ll 1$. This implies that the discrete dynamics with a learning rate larger than $O(\epsilon)$ cannot converge to that stationary point, and instead, it may converge to some periodic trajectories or just oscillate around the stationary point. This analysis also holds for Adam.


\begin{remark}
A rough way to understand the near singularity at the stationary points is to view the iterations as a GD with very large learning rate. Specifically, when $\bx_t$ is close to a stationary point, $\bv$ is very small, and we have 
\begin{equation}\label{eqn: approx_gd}
    \bx_{t+1}=\bx_t-\eta\frac{\nabla f(\bx_t)}{\sqrt{\bv_{t+1}}+\epsilon} \approx \bx_t-\frac{\eta}{\epsilon}\nabla f(\bx_t).
\end{equation}
Hence, $\bx_t$ does not converge to the stationary point unless $\lambda_{\max}(\nabla^2 f(\bx_t))\leq O(\varepsilon)$, i.e.,  the landscape is extremely flat.
\end{remark}

According the above analysis, oscillations may not happen if the landscape around the stationary point is sufficiently flat. This happens to be the case of the cross entropy loss function, for which   $\lambda_{\max}(\nabla^2 f(\bx_t))\to 0$ as $\bx_t$ approaches the minimum.

For low dimensional strongly convex objective functions, RMSprop can converge to a $2$-periodic solution. For example, if the objective function is $f(x)=\frac{1}{2}x^2$, then the $2$-periodic solution is an oscillation between $x=\frac{\eta}{2}$ and $x=-\frac{\eta}{2}$, where $\eta$ is the learning rate. Figure~\ref{fig: 2periodic} shows the convergence to this $2$-periodic solution. This gives us a toy example of the small oscillations around the minimum.

\begin{figure}[!h]
    \centering
    \includegraphics[width=0.5\columnwidth]{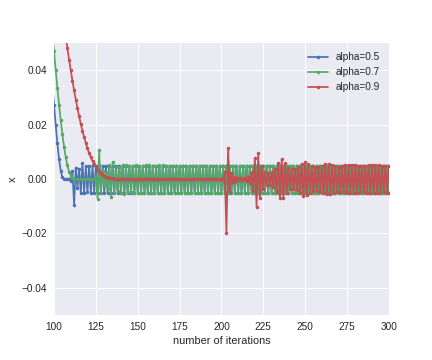}
    \vspace{-5mm}
    \caption{\small The trajectory of RMSprop for the $1$-dimensional quadratic function $f(x)=\frac{x^2}{2}$ for different values of $\alpha$. $\eta=0.01$. One sees that  all the trajectories eventually converge to the $2$-periodic solution at $x=\frac{\eta}{2}$ and $x=-\frac{\eta}{2}$.}
    \label{fig: 2periodic}
\end{figure}

For more complicated objective functions, such as high-dimensional quadratic function, or the loss function of neural network models, the RMSprop trajectories show more complicated oscillation patterns, such as the spikes. We will take a closer look at the large spikes in the next section, where we see that Adam is also vulnerable to spikes.

\section{Adam: performances for different values of a and b}
The dynamic behavior of Adam is more complicated than RMSprop since it is influenced by $2$ hyper-parameters. Different combinations of $\alpha$ and $\beta$ (or $a$ and $b$) can lead to different dynamic patterns. 
To rule out the influence of the learning rate,  we will consider $a$ and $b$ instead of $\alpha$ and $\beta$. As is mentioned 
before, $\alpha$ and $\beta$ are given by $a$ and $b$ through $\alpha=1-a\eta$ and $\beta=1-b\eta$. As we have seen in Proposition \ref{thm: limit1} , when $a$ and $b$ are sufficiently large compared to $\eta$, Adam behaves like signGD. For relatively small $a$ and $b$, through extensive numerical experiments, we have found  that  there are roughly three different regimes of qualitative patterns in the parameter space (see Figure~\ref{fig: adam_patterns}):
\begin{figure*}[!h]
    \centering
    \includegraphics[width=0.32\textwidth]{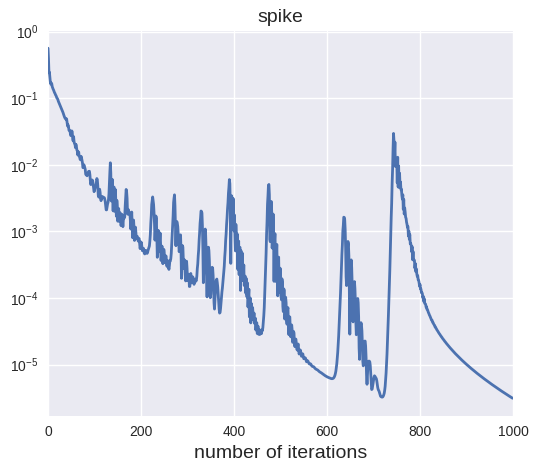}
    \includegraphics[width=0.32\textwidth]{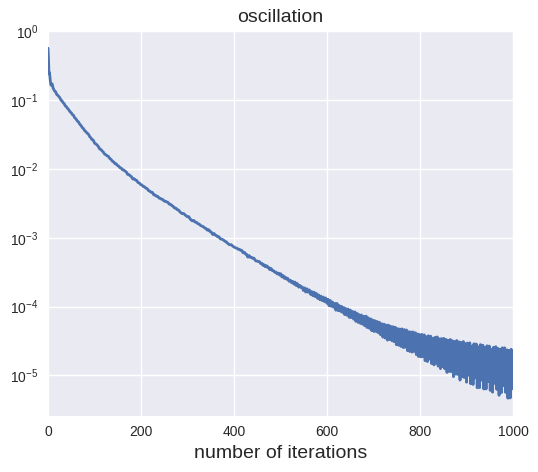}
    \includegraphics[width=0.32\textwidth]{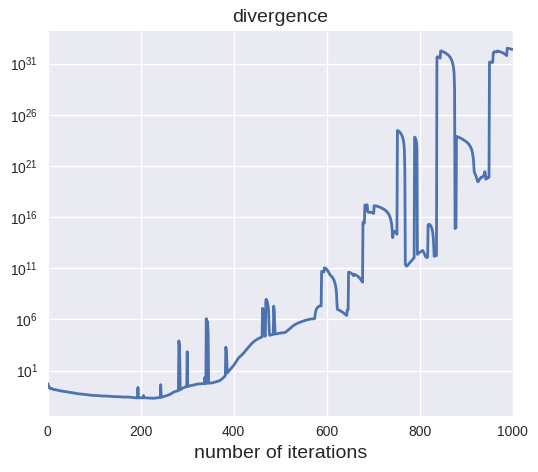}
    \includegraphics[width=0.32\textwidth]{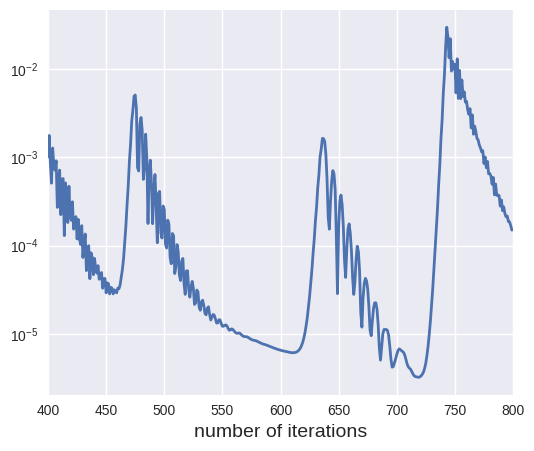}
    \includegraphics[width=0.32\textwidth]{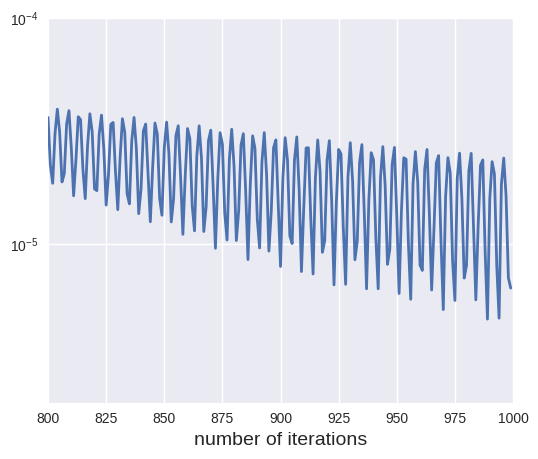}
    \includegraphics[width=0.32\textwidth]{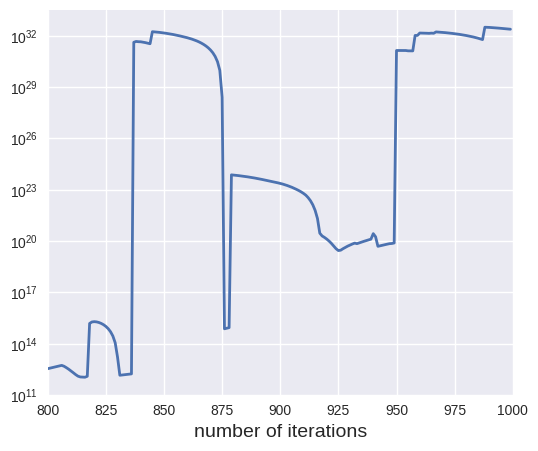}
    \includegraphics[width=0.32\textwidth]{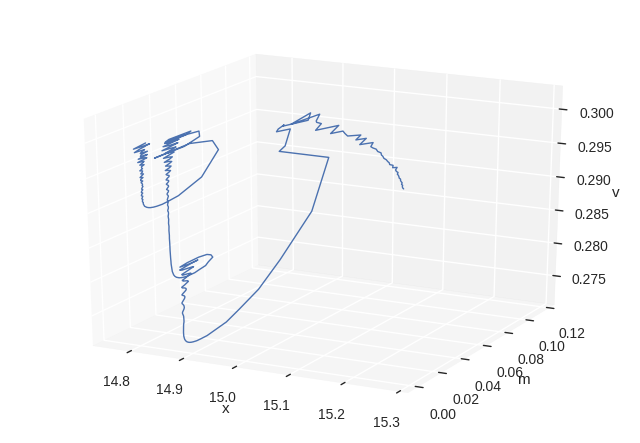}
    \includegraphics[width=0.32\textwidth]{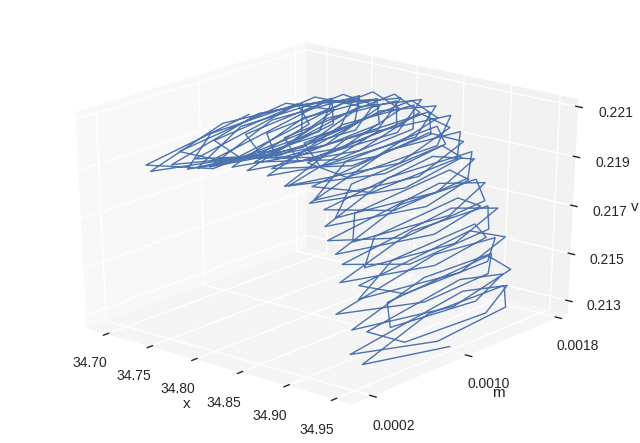}
    \includegraphics[width=0.32\textwidth]{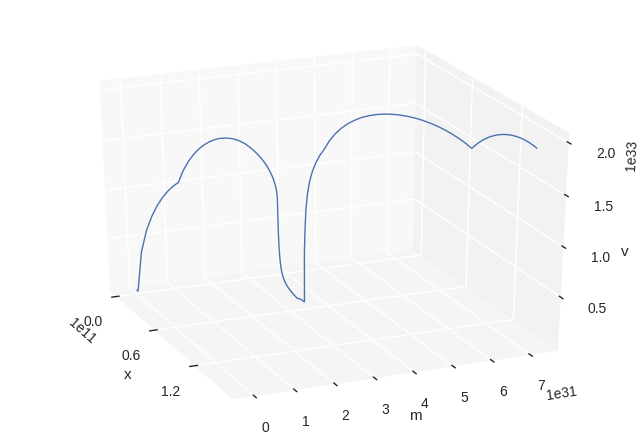}
    \caption{The three typical behavior patterns for Adam and the trajectories in the state space of $(\|\bx\|, \|\bfm\|, \|\sqrt{\bv}\|)$. $\eta=0.001$. The model and the training data are the same as Figure~\ref{fig: adap_signgd}. The first row shows the loss curve of totally $1000$ iterations, the second row shows part of the loss curve (the last $200$ iterations for oscillation and divergence regimes, and $400-800$ iterations for the spike regime), the bottom row shows the state space trajectory in the same period shown in the second row. {\bf Left:} $a=1$, $b=100$, large spikes appear in the loss curve; {\bf Middle:} $a=10$, $b=10$, the loss is small and oscillates very fast, and the amplitude of the oscillation is also small; {\bf Right:} $a=100$, $b=1$, the loss is large and blows up.}
    \label{fig: adam_patterns}
\end{figure*}
\begin{enumerate}
    \item {\bf The spike regime} happens when $b$ is sufficiently larger than $a$. In this regime, large spikes appear in the loss curve, which makes the optimization process unstable. By observations, spikes do not prevent the algorithm from achieving a small training loss, but they make the loss curve unstable by frequently driving the training loss to large values.
    
    \item {\bf The oscillation regime} happens when $a$ and $b$ have similar magnitude (or in the same order). In this regime, the loss curve exhibits fast and small oscillations. A small and stable loss curve can be achieved in this regime.
    
    \item {\bf The divergence regime} happens when $a$ is sufficiently larger than $b$. In this regime, the loss curve is unstable and usually diverges after a period of training. This regime should be avoided in practice since the training loss stays large. 
\end{enumerate}
In Figure~\ref{fig: adam_patterns}, we show one typical loss curve for each regime for a typical neural network model. We also show typical trajectories in the state space of $(\|\bx\|, \|\bfm\|, \|\sqrt{\bv}\|)$ for the three regimes. These trajectories are also qualitatively different for different regimes.

Next, we study the transition between  the different regimes and the training loss behavior in different regimes. 
To this end, we carried out experiments 
for a multi-layer neural network model on the Fashion-MNIST dataset, with  different values of $a$ and $b$ until the behavior of the training loss curve stabilizes. The left panel of Figure~\ref{fig: heatmap_nn} shows the heatmap of the average loss value of the last $1000$ iterations. The right panel of Figure~\ref{fig: heatmap_nn} shows the classification of the behavior of the training curve into three different categories (oscillations, spikes, and divergence).  

From these figures, we see that in the divergence regime, the training loss does not perform well (actually in some cases it may even blow up).
Hence this regime should be avoided in practice. In the oscillation regime, the loss values are small and quite robust to the change of hyper-parameters. Therefore this is the regime that should be preferred in practice. This is the regime when $a\approx b$.
\begin{figure}[!h]
\centering
\includegraphics[width=0.42\textwidth]{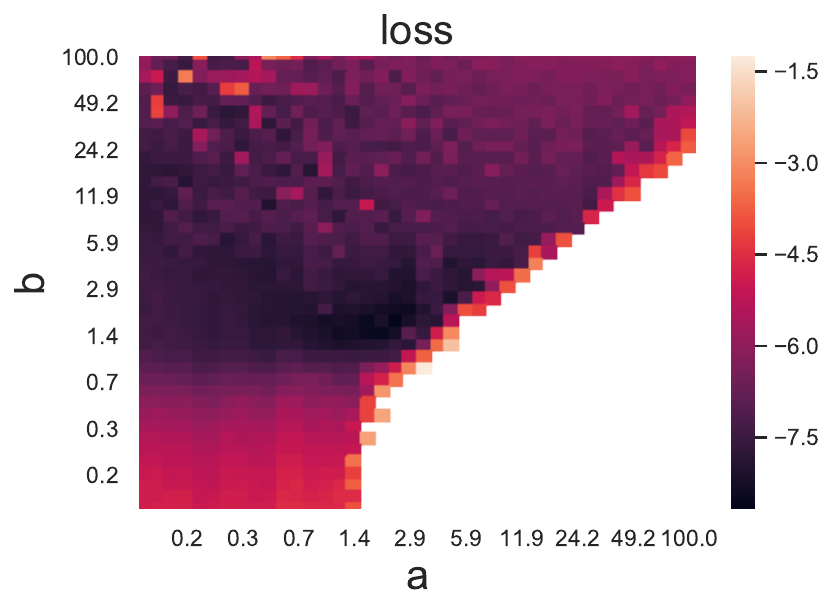}
\includegraphics[width=0.4\textwidth]{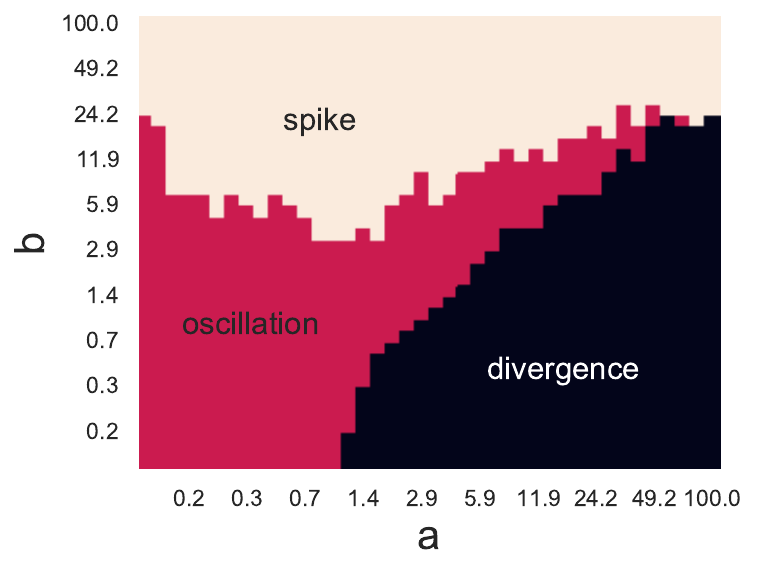}
\vspace{-4mm}
\caption{ \small Train a neural network to fit the FashionMNIST dataset with Adam optimizers with varying momentum factors. The model is a fully connected network of $6$ hidden layers and the width of each layer is $500$. The learning rate of is 1e-3.  {\bf Left:} Heatmap of average training loss  over the last $1000$ iterations. The loss is shown in the logarithmic scale. $a$ and $b$ range from $0.1$ to $100$ and are also shown in the logarithmic scale. {\bf Right:} The classification of the different dynamical behaviors of the loss curve. } 
\label{fig: heatmap_nn}
\end{figure}

\vspace{-2mm}
\subsection{Training ResNets on CIFAR10}
The above investigation suggests that Adam performs better when $\alpha\approx\beta$. Here we provide  further support  by considering a more realistic problem:   training a ResNet18 \citep{he2016deep} on CIFAR10 using stochastic Adam with a large batch size. The results are shown in Figure \ref{fig: cifar}. We see that with the default parameters ($\beta=0.9, \alpha=0.999$), there are are  large spikes during the late phase of training. In contrast, when $a\approx b$, Adam converges very smoothly and is also faster than using the default parameters. 
\begin{figure}[!h]
\centering
\includegraphics[width=0.42\columnwidth]{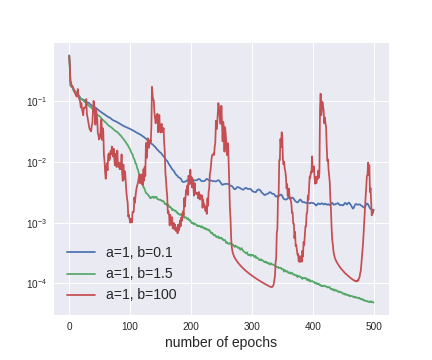}
\vspace{-4mm}
\caption{ \small The training loss curve of stochastic Adam on a ResNet18 model and CIFAR-10 dataset. The learning rate is 1e-3. The red line shows the
results of using the default hyper-parameters setting ($\beta=0.9, \alpha=0.999$). $1000$ samples are taken from each class to form the training dataset. The network is a standard ResNet18 used in \citep{he2016deep} but with number of channels halved. The batch size is $1000$. }
\label{fig: cifar}
\end{figure}

\subsection{Solving Poisson equation}
Consider the Poisson equation with Dirichlet boundary condition
\begin{align}
\notag -\Delta u &= f \, \text{ in } \Omega,\quad
u= g \, \text{ on } \,\partial \Omega.
\end{align}
Let $u(\cdot;\theta)$ denote the parameterized model. 
Deep Galerkin Method (DGM) \citep{sirignano2018dgm} looks for the solution that minimizes the following objective function
\begin{align}
    \hat{I}(\theta)=\frac{1}{n_d}\sum_{i=1}^{n_d}(\Delta u(\bx_i;\theta)+f(\bx_i))^2 + \frac{1}{n_b}\sum_{j=1}^{n_b} (u(\tilde{\bx}_j;\theta)-g(\tilde{\bx}_j))^2,
\end{align}
where $\{\bx_i\}_{i=1}^{n_d}$ and $\{\tilde{\bx}_j\}_{j=1}^{n_b}$ are samples uniformly drawn from $\Omega$ and $\partial \Omega$, respectively.

Here, we consider two examples: 
\begin{itemize}
    \item $\Omega=(0,1)^4, f(\bx)=0, g(\bx)=x_1x_2+x_3x_4$. In this case, the solution is $u^*(\bx)=x_1x_2+x_3x_4$. $u(\cdot;\theta)$ is parameterized using a 3-layer fully-connected networks with the architecture being 4-200-200-1 and the Tanh activation function is applied.
    \item $\Omega=\{(x_1,x_2): x_1^2+x_2^2<1\}, f(x_1,x_2)=1, g(x_1,x_2)=0$. The solution in this case is $u^*(\bx)=\frac{1}{4}(x_1^2+x_2^2-1)$. $u(\cdot;\theta)$ is parameterized using a 5-layer fully connected network, whose architecture is 2-10-10-10-10-1. The GELU activation function \citep{hendrycks2016gaussian} is applied.
\end{itemize}
For each example, we uniformly sample $2000$ points in $\Omega$ and extra $2000$ points on $\partial \Omega$ to form the training set. 
 Figure \ref{fig: pde} shows the training curves of Adams with various $a$'s and $b$'s. The learning rate is fixed to be 5e-4. One can see that with the default hyperparameters, Adam inevitably  endures large spikes during the late phase of training. In contrast,  when $a\approx b$, the training curve becomes much more stable and faster, although there still exist very small oscillations during the very late phase of training. It is also expected that Adam performs the worst when $a>b$.

\begin{figure}[!h]
    \centering
    \subfigure[$\Omega=(0,1)^4, f=0, g(\bx)=x_1x_2+x_3x_4$.]{
    \includegraphics[width=0.4\textwidth]{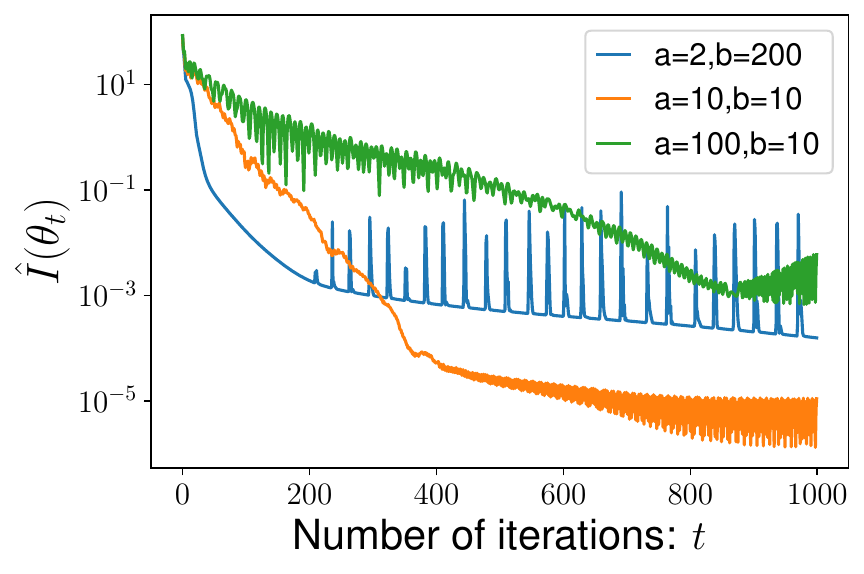}
    \includegraphics[width=0.4\textwidth]{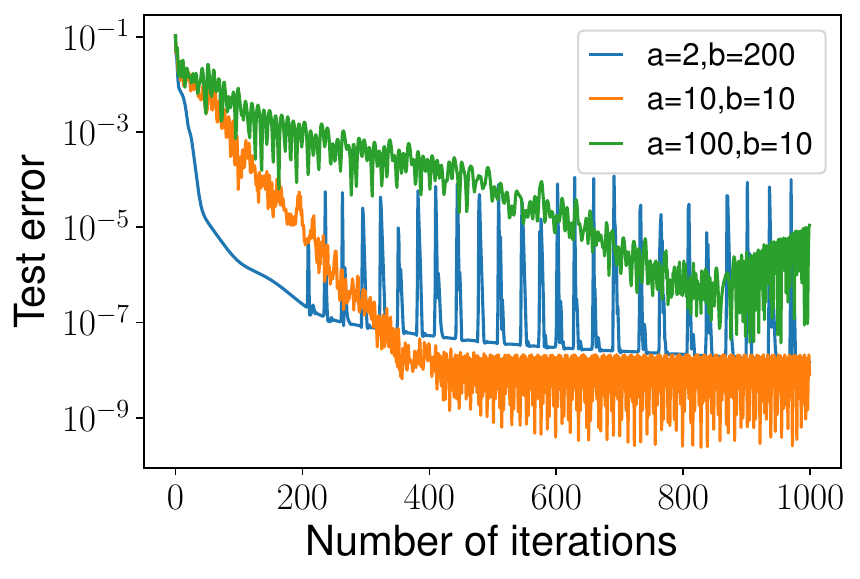}
    }
    \subfigure[$\Omega=\{(x_1,x_2):x_1+x_2<1\}, f=1, g=0$]{
    \includegraphics[width=0.4\textwidth]{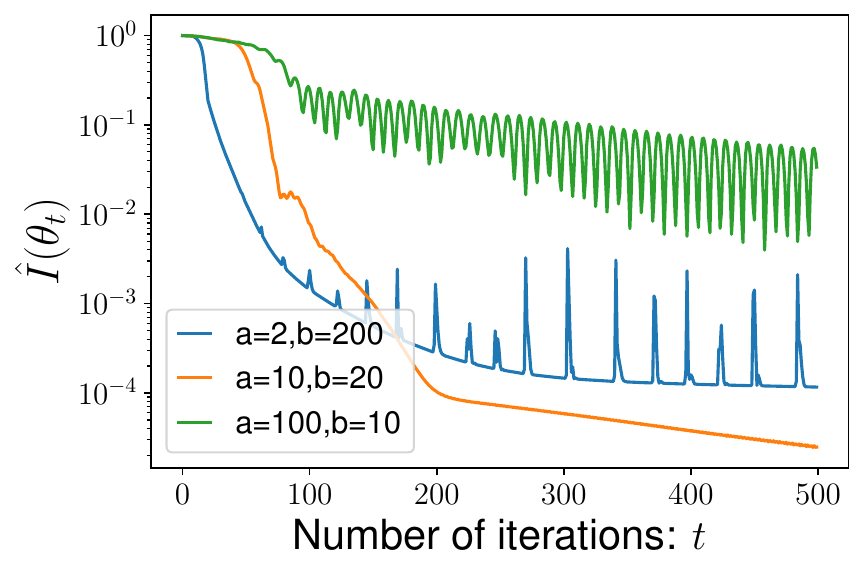}
    \includegraphics[width=0.4\textwidth]{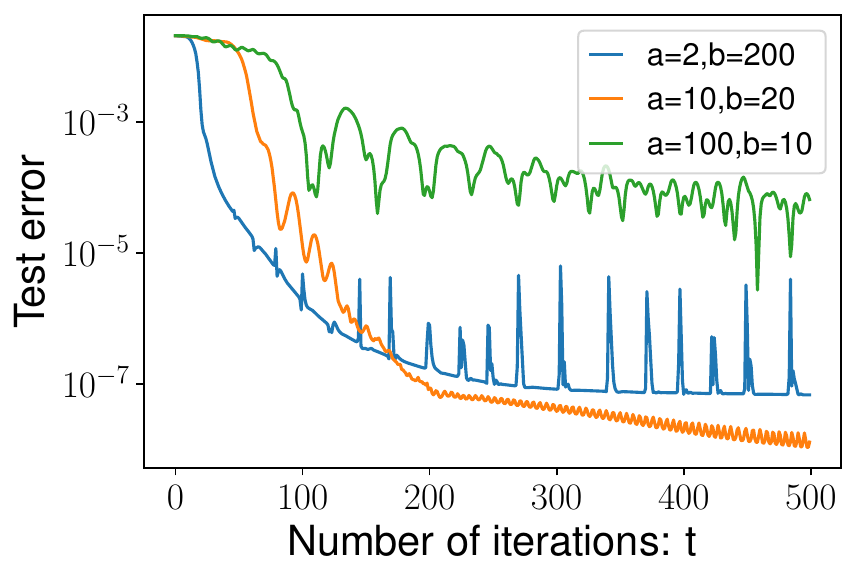}
    }
    \vspace{-4mm}
    \caption{ \small Solving two Poisson equations with the deep Galerkin method (DGM). Adam optimizers with various $a$'s and $b$'s  are used to optimize the objective functions. The learning rate is fixed to be 5e-4. Note that the case of $a=2,b=200$ corresponds to the default setting, i.e. $\beta=0.9,\alpha=0.999$. \textbf{Left: } The dynamics of the objective function of DGM. \textbf{Right: } The dynamics of the test error. Here the test error is $\EE_{\bx}[(u(\bx;\theta)-u^*(\bx))]$, which is estimated by the emprical mean over $10000$ extra samples, independently drawn from $\text{Unif}(\Omega)$.}
    \label{fig: pde}
\end{figure}

\section{Discussion}
In this paper, we reported the results of some systematical investigation on the  dynamic behavior of  adaptive gradient algorithms, 
particularly RMSprop and Adam. Three typical phenomena---fast initial convergence, small oscillation, and large spikes---are observed and analyzed.
The influence of the choice of the hyper-parameters on the dominant training  behavior is also investigated. 

It is worth noting that the investigation in this paper focuses on the full-batch setting. However, the result in Figure \ref{fig: cifar} provides some evidence  to show that the phenomena revealed here should also be of relevance for the stochastic setting when the batch size is relatively large. The systematic study of the influence of batch size, especially in the small-batch regime, is left to future work.

There are still many other important open questions.
For example, learning rate decay is a common practice used in training large neural networks. When 
performing learning rate decay, usually one does not change the values of $\alpha$ and $\beta$.
This makes the effective $a$ and $b$  larger, pushing the optimizer to the signGD-like regime. Another choice is to adaptively tune $\alpha, \beta$ such that $a$ and $b$ are kept fixed. It is interesting to see the comparison of the two strategies.   

This paper focuses on optimization.
For machine learning problems, another important consideration when implementing optimization algorithms is the generalization performance. It has been reported that the solutions found by adaptive gradient algorithms usually perform a bit worse than those found by SGD 
in terms of generalization (see~\citep{wilson2017marginal}). 
The study of generalization performance of adaptive gradient algorithms is left to future work. 

{\small 
\bibliography{main_msml.bbl}

\begin{thebibliography}{18}
\providecommand{\natexlab}[1]{#1}
\providecommand{\url}[1]{\texttt{#1}}
\expandafter\ifx\csname urlstyle\endcsname\relax
  \providecommand{\doi}[1]{doi: #1}\else
  \providecommand{\doi}{doi: \begingroup \urlstyle{rm}\Url}\fi

\bibitem[Balles and Hennig(2018)]{balles2018dissecting}
Lukas Balles and Philipp Hennig.
\newblock Dissecting {Adam}: The sign, magnitude and variance of stochastic
  gradients.
\newblock In \emph{International Conference on Machine Learning}, pages
  404--413, 2018.

\bibitem[Barakat and Bianchi(2018)]{barakat2018convergence}
Anas Barakat and Pascal Bianchi.
\newblock Convergence and dynamical behavior of the {ADAM} algorithm for non
  convex stochastic optimization.
\newblock \emph{arXiv preprint arXiv:1810.02263}, 2018.

\bibitem[Chen et~al.(2018)Chen, Liu, Sun, and Hong]{chen2018convergence}
Xiangyi Chen, Sijia Liu, Ruoyu Sun, and Mingyi Hong.
\newblock On the convergence of a class of {Adam}-type algorithms for
  non-convex optimization.
\newblock In \emph{International Conference on Learning Representations}, 2018.

\bibitem[da~Silva and Gazeau(2018)]{da2018general}
Andr{\'e}~Belotto da~Silva and Maxime Gazeau.
\newblock A general system of differential equations to model first order
  adaptive algorithms.
\newblock \emph{arXiv preprint arXiv:1810.13108}, 2018.

\bibitem[Duchi et~al.(2011)Duchi, Hazan, and Singer]{duchi2011adaptive}
John Duchi, Elad Hazan, and Yoram Singer.
\newblock Adaptive subgradient methods for online learning and stochastic
  optimization.
\newblock \emph{Journal of machine learning research}, 12\penalty0 (7), 2011.

\bibitem[He et~al.(2016)He, Zhang, Ren, and Sun]{he2016deep}
Kaiming He, Xiangyu Zhang, Shaoqing Ren, and Jian Sun.
\newblock Deep residual learning for image recognition.
\newblock In \emph{Proceedings of the IEEE conference on computer vision and
  pattern recognition}, pages 770--778, 2016.

\bibitem[Hendrycks and Gimpel(2016)]{hendrycks2016gaussian}
Dan Hendrycks and Kevin Gimpel.
\newblock Gaussian error linear units ({GELUs}).
\newblock \emph{arXiv preprint arXiv:1606.08415}, 2016.

\bibitem[Kingma and Ba(2014)]{kingma2014adam}
Diederik~P Kingma and Jimmy Ba.
\newblock Adam: A method for stochastic optimization.
\newblock \emph{arXiv preprint arXiv:1412.6980}, 2014.

\bibitem[Li and Orabona(2019)]{li2019convergence}
Xiaoyu Li and Francesco Orabona.
\newblock On the convergence of stochastic gradient descent with adaptive
  stepsizes.
\newblock In \emph{The 22nd International Conference on Artificial Intelligence
  and Statistics}, pages 983--992, 2019.

\bibitem[Polyak(1963)]{polyak1963gradient}
Boris~Teodorovich Polyak.
\newblock Gradient methods for minimizing functionals.
\newblock \emph{Zhurnal Vychislitel'noi Matematiki i Matematicheskoi Fiziki},
  3\penalty0 (4):\penalty0 643--653, 1963.

\bibitem[Reddi et~al.(2018)Reddi, Kale, and Kumar]{reddi2018convergence}
Sashank~J Reddi, Satyen Kale, and Sanjiv Kumar.
\newblock On the convergence of {Adam} and beyond.
\newblock In \emph{International Conference on Learning Representations}, 2018.

\bibitem[Riedmiller and Braun(1992)]{riedmiller1992rprop}
Martin Riedmiller and Heinrich Braun.
\newblock Rprop-a fast adaptive learning algorithm.
\newblock In \emph{Proc. of ISCIS VII), Universitat}. Citeseer, 1992.

\bibitem[Ruder(2016)]{ruder2016overview}
Sebastian Ruder.
\newblock An overview of gradient descent optimization algorithms.
\newblock \emph{arXiv preprint arXiv:1609.04747}, 2016.

\bibitem[Sirignano and Spiliopoulos(2018)]{sirignano2018dgm}
Justin Sirignano and Konstantinos Spiliopoulos.
\newblock {DGM}: A deep learning algorithm for solving partial differential
  equations.
\newblock \emph{Journal of computational physics}, 375:\penalty0 1339--1364,
  2018.

\bibitem[Tieleman and Hinton(2012)]{Tieleman2012}
Tijmen Tieleman and Geoffrey Hinton.
\newblock Lecture 6.5-rmsprop: Divide the gradient by a running average of its
  recent magnitude.
\newblock \emph{COURSERA: Neural networks for machine learning}, 4\penalty0
  (2):\penalty0 26--31, 2012.

\bibitem[Wilson et~al.(2017)Wilson, Roelofs, Stern, Srebro, and
  Recht]{wilson2017marginal}
Ashia~C Wilson, Rebecca Roelofs, Mitchell Stern, Nati Srebro, and Benjamin
  Recht.
\newblock The marginal value of adaptive gradient methods in machine learning.
\newblock In \emph{Advances in neural information processing systems}, pages
  4148--4158, 2017.

\bibitem[Xie et~al.(2020)Xie, Wu, and Ward]{xie2020linear}
Yuege Xie, Xiaoxia Wu, and Rachel Ward.
\newblock Linear convergence of adaptive stochastic gradient descent.
\newblock In \emph{International Conference on Artificial Intelligence and
  Statistics}, pages 1475--1485. PMLR, 2020.

\bibitem[Zhou et~al.(2018)Zhou, Tang, Yang, Cao, and Gu]{zhou2018convergence}
Dongruo Zhou, Yiqi Tang, Ziyan Yang, Yuan Cao, and Quanquan Gu.
\newblock On the convergence of adaptive gradient methods for nonconvex
  optimization.
\newblock \emph{arXiv preprint arXiv:1808.05671}, 2018.

\end{thebibliography}
}

\newpage
\appendix
\newcommand{\xketa}{\bx_k^\eta}
\newcommand{\xkpeta}{\bx_{k+1}^\eta}
\newcommand{\txketa}{\tilde{\bx}_k^\eta}
\newcommand{\txkpeta}{\tilde{\bx}_{k+1}^\eta}

\section{Proof of Proposition~\ref{thm: limit1}}\label{app: proof1}
Here we prove Proposition~\ref{thm: limit1}. For that purpose, we show that for any $T>0$ and $\tau>0$, there exists an $\eta_{T,\tau}$, such that as long as $\eta<\eta_{T,\tau}$ we have
\begin{equation}
    \sup_{t\in[0,T]}\|\bX^\eta(t)-\bx(t)\| < \tau.
\end{equation}
In the following we focus on RMSprop. The proof for Adam is similar.

First, let $K$ be a positive integer whose value will be specified later, and let $\txketa=\bx(k\eta)$. Then, for $\bx_K^\eta$ and $\tilde{\bx}_K^\eta$ we have
\begin{equation*}
    \|\bx_K^\eta-\bx_0\|\leq \eta\sum\limits_{i=0}^{K-1}\left\|\frac{\nabla f(\bx_i)}{\sqrt{\bv_{i+1}}+\epsilon}\right\|\leq \eta\sum\limits_{i=0}^{K-1}\frac{M}{\epsilon}=\frac{\eta KM}{\epsilon},
\end{equation*}
and
\begin{equation*}
    \|\tilde{\bx}_K^\eta-\bx_0\|\leq\int_0^{K\eta}\left\|\frac{\nabla f(\bx(t))}{|\nabla f(\bx(t))|+\epsilon}\right\|dt\leq \frac{\eta KM}{\epsilon}
\end{equation*}
Therefore, 
\begin{equation}
    \|\bx_K^\eta-\tilde{\bx}_K^\eta\|\leq \frac{2\eta KM}{\epsilon}.
\end{equation}

Next, for $k\geq K$, we have
\begin{equation*}
    \xkpeta-\txkpeta = (\xketa-\txketa) + \left(\int_{k\eta}^{(k+1)\eta}\frac{\nabla f(\bx(t))}{|\nabla f(\bx(t))|+\epsilon}dt - \eta\frac{\nabla f(\xketa)}{\sqrt{\bv_{k+1}}+\epsilon}\right).
\end{equation*}
Let 
\begin{equation*}
    \Delta = \int_{k\eta}^{(k+1)\eta}\frac{\nabla f(\bx(t))}{|\nabla f(\bx(t))|+\epsilon}dt - \eta\frac{\nabla f(\xketa)}{\sqrt{\bv_{k+1}}+\epsilon},
\end{equation*}
then 
\begin{equation}
    \|\xkpeta-\txkpeta\| \leq \|\xketa-\txketa\| + \|\Delta\|. 
\end{equation}
Next we estimate $\|\Delta\|$. First we have
\begin{align*}
\Delta & = \int_{k\eta}^{(k+1)\eta} \frac{\nabla f(\bx(t))}{|\nabla f(\bx(t))|+\epsilon} - \frac{\nabla f(\xketa)}{\sqrt{\bv_{k+1}}+\epsilon} dt \\
  & = \int_{k\eta}^{(k+1)\eta} \nabla f(\bx(t))\left(\frac{1}{|\nabla f(\bx(t))|+\epsilon} - \frac{1}{\sqrt{\bv_{k+1}}+\epsilon}\right)dt + \int_{k\eta}^{(k+1)\eta} \frac{\nabla f(\bx(t))-\nabla f(\xketa)}{\sqrt{\bv_{k+1}}+\epsilon} dt \\
  & := I + J.
\end{align*}
For $J$, we have
\begin{align}
\|J\| &\leq \frac{1}{\epsilon}\int_{k\eta}^{(k+1)\eta} \|\nabla f(\bx(t))-\nabla f(\xketa)\| dt \nonumber\\
  & \leq \frac{L}{\epsilon}\int_{k\eta}^{(k+1)\eta} \|\bx(t)-\xketa\|dt \nonumber\\
  & \leq \frac{L}{\epsilon}\int_{k\eta}^{(k+1)\eta} (\|\bx(t)-\txketa\|+\|\txketa-\xketa\|)dt \nonumber\\
  & \leq \frac{L}{\epsilon} \left(\frac{\eta^2M}{\epsilon} + \eta\|\txketa-\xketa\| \right) \nonumber\\
  & = \frac{\eta L}{\epsilon}\|\txketa-\xketa\| + \frac{\eta^2LM}{\epsilon^2}. \label{eqn: est_J}
\end{align}
For $I$, we have
\begin{align}
\|I\| & \leq \int_{k\eta}^{(k+1)\eta} \left\| \frac{|\nabla f(\bx(t))|}{|\nabla f(\bx(t))|+\epsilon}\frac{\sqrt{\bv_{k+1}}-|\nabla f(\bx(t))|}{\sqrt{\bv_{k+1}}+\epsilon} \right\|dt \nonumber \\
  & \leq \int_{k\eta}^{(k+1)\eta} \left\| \frac{\sqrt{\bv_{k+1}}-|\nabla f(\bx(t))|}{\sqrt{\bv_{k+1}}+\epsilon} \right\|dt \nonumber\\ 
  & \leq \frac{1}{\epsilon} \int_{k\eta}^{(k+1)\eta} \|\sqrt{\bv_{k+1}}-|\nabla f(\xketa)|\|dt + \frac{1}{\epsilon} \int_{k\eta}^{(k+1)\eta} \||\nabla f(\xketa)|-|\nabla f(\bx(t))|\|dt \label{eqn: est_I}
\end{align}
The second term in~\eqref{eqn: est_I} can be estimated in a similar way as $\|J\|$, and  it can be bounded by
\begin{equation*}
    \frac{\eta L}{\epsilon}\|\txketa-\xketa\| + \frac{\eta^2LM}{\epsilon^2}.
\end{equation*}
For the first term of~\eqref{eqn: est_I}, use the fact that $(a-b)^2\leq a^2-b^2$ for any $a\geq b\geq0$, we have
\begin{align}
& \frac{1}{\epsilon} \int_{k\eta}^{(k+1)\eta} \|\sqrt{\bv_{k+1}}-|\nabla f(\xketa)|\|dt \nonumber\\
&\leq \frac{\eta}{\epsilon}\|\bv_{k+1}-\nabla f^2(\xketa)\|^{\frac{1}{2}} \nonumber\\
  & = \frac{\eta}{\epsilon}\left\| (1-\alpha)\nabla f^2(\bx_k^\eta) + \alpha(1-\alpha)\nabla f^2(\bx_{k-1}^\eta) + \cdots + \alpha^k(1-\alpha)\nabla f^2(\bx_0^\eta) - \nabla f^2(\bx_k^\eta) \right\|^{\frac{1}{2}} \nonumber\\
  &\leq \frac{\eta}{\epsilon}\left(\left\|(1-\alpha)\sum\limits_{i=0}^{K-1}\alpha^i(\nabla f^2(\bx_{k-i}^\eta)-\nabla f^2(\bx_k^\eta))\right\| + 2\alpha^K M\right)^{\frac{1}{2}} \nonumber\\
  &\leq \frac{2\eta^{3/2}MK^{1/2}}{\epsilon^{3/2}}+\frac{2\eta M\alpha^{K/2}}{\epsilon}.
\end{align}
Hence  we have
\begin{equation}
    \|I\|\leq \frac{\eta L}{\epsilon}\|\txketa-\xketa\| + \frac{\eta^2LM}{\epsilon^2} + \frac{2\eta^{3/2}MK^{1/2}}{\epsilon^{3/2}}+\frac{2\eta M\alpha^{K/2}}{\epsilon}. \label{eqn: est_I2}
\end{equation}
Combining~\eqref{eqn: est_I2} with~\eqref{eqn: est_J} we get the estimate of $\Delta$:
\begin{equation}
    \|\Delta\|\leq \frac{2\eta L}{\epsilon}\|\txketa-\xketa\| + \frac{2\eta^2LM}{\epsilon^2} + \frac{2\eta^{3/2}MK^{1/2}}{\epsilon^{3/2}}+\frac{2\eta M\alpha^{K/2}}{\epsilon}.
\end{equation}
Hence
\begin{equation}
    \|\xkpeta-\txkpeta\| \leq (1+\frac{2\eta L}{\epsilon})\|\xketa-\txketa\| +  \frac{2\eta^2LM}{\epsilon^2} + \frac{2\eta^{3/2}MK^{1/2}}{\epsilon^{3/2}}+\frac{2\eta M\alpha^{K/2}}{\epsilon}.
\end{equation}

Finally, by Gronwall's inequality,  we have
\begin{align}
\|\xketa-\txketa\| & \leq \left(1+\frac{2\eta L}{\epsilon}\right)^{k-K}\|\bx_K^\eta-\tilde{\bx}_K^\eta\| +  \left(1+\frac{2\eta L}{\epsilon}\right)^{k-K}\left(\frac{\eta M}{\epsilon} + \frac{\eta^{1/2}MK^{1/2}}{\epsilon^{1/2}L}+\frac{ M\alpha^{K/2}}{L}\right) \nonumber\\
  & \leq \left(1+\frac{2\eta L}{\epsilon}\right)^{k}\left(\frac{2\eta KM}{\epsilon}+\frac{\eta^{1/2}MK^{1/2}}{\epsilon^{1/2}L}+\frac{ M\alpha^{K/2}}{L}\right). \label{eqn: gronwall}
\end{align}
We want~\eqref{eqn: gronwall} to hold for $t\leq T$, which means for all $k\leq \frac{T}{\eta}$. For these values of  $k$, we have
\begin{equation}
\|\xketa-\txketa\|\leq e^{\frac{LT}{\epsilon}}\left(\frac{2\eta KM}{\epsilon}+\frac{\eta^{1/2}MK^{1/2}}{\epsilon^{1/2}L}+\frac{ M\alpha^{K/2}}{L}\right).
\end{equation}
Therefore, for any fixed small value $\tau>0$, by taking sufficiently large $K$ and sufficiently small $\eta$, we can achieve
\begin{equation}
    \|\xketa-\txketa\|\leq \frac{\tau}{2},
\end{equation}
for any $0\leq k\leq \lfloor \frac{T}{\eta} \rfloor+1$. Then, if we further let 
\begin{equation*}
    \eta < \frac{\tau\epsilon}{4M},
\end{equation*}
for any $t\in[0,T]$, let $k$ satisfy  $t\in[k\eta, (k+1)\eta)$, we have
\begin{align}
\|\bX^\eta(t)-\bx(t)\| &\leq \|\xketa-\txketa\| + \|\bX^\eta(t)-\txketa\| + \|\bx(t)-\xketa\| \nonumber\\
  &\leq \frac{\tau}{2} + \frac{2\eta M}{\epsilon} \nonumber\\
  &\leq \tau.
\end{align}
This completes the proof.

\section{Proof of Proposition~\ref{prop: initial}}
By~\citep{barakat2018convergence} as well as Proposition~\ref{thm: limit2}, the solution of~\eqref{eqn: adam_ode} exists, and this solution is the limit trajectory of discrete Adam algorithm~\eqref{eqn: adam} with $\eta\rightarrow0$ and $\alpha=1-a\eta$, $\beta=1-b\eta$. Hence, initially we have
\begin{equation*}
    \frac{d\bx(0)}{dt} = -\textrm{sign}(\nabla f(\bx_0)). 
\end{equation*}
By~\eqref{eqn: adam_ode}, we can solve $\bv$ and $\bfm$ involving $\bx$:
\begin{align}
    \bv(t) &= a\int_0^t e^{a(s-t)}(\nabla f(\bx(t)))^2 ds, \nonumber\\
    \bfm(t) &= b\int_0^t e^{b(s-t)}\nabla f(\bx(t)) ds,
\end{align}
and hence for the equation of $\bx$ we have
\begin{equation}\label{eqn:x_dyn}
    \dot{\bx}(t) = -\sqrt{\frac{\int_0^t e^{-as}ds}{\int_0^t e^{a(s-t)}(\nabla f(\bx(t)))^2 ds}}\left(\frac{\int_0^t e^{b(s-t)}\nabla f(\bx(t)) ds}{\int_0^t e^{-bs}ds}\right).
\end{equation}
Let $t^*$ be the first time when some element of $\nabla f(\bx(t))$ becomes smaller than $c/2$, i.e.
\begin{equation*}
    t^* = \inf_t\left\{[\nabla f(\bx(t))]_i\leq \frac{c}{2} \textrm{ for some }i\right\},
\end{equation*}
then for any $t\in[0, t^*]$ we have $[\nabla f(\bx(t))]_i\geq c/2$ for any $i=1,2,...,d$. This together with~\eqref{eqn:x_dyn} implies
\begin{equation}\label{eqn:x_speed}
    \left| [\dot{\bx}]_i \right|\leq \frac{2M}{c}
\end{equation}
for any $i=1,2,...,d$. Then, assume $t^*<\frac{c^2}{4ML}$, we have 
\begin{align*}
|[\nabla f(\bx(t^*))]_i| &\geq |[\nabla f(\bx(0))]_i|-L\|\bx(t^*)-\bx(0)\| \\
    &> |[\nabla f(\bx(0))]_i| - L\frac{c^2}{4ML}\frac{2M}{c} \\
    &\geq c - \frac{c}{2} \\
    &=\frac{c}{2},
\end{align*}
which is contradictory to the definition of $t^*$. Therefore, $t^*\geq\frac{c^2}{4ML}$. Considering $c<M$, we obtain $\tau<t^*$. 

Next, let 
\begin{align}
\br_1(\tau) &= \frac{\int_0^\tau e^{a(s-\tau)}\left((\nabla f(\bx(s)))^2-(\nabla f(\bx(\tau)))^2\right)ds}{\int_0^\tau e^{-as}ds}, \nonumber\\
\br_2(\tau) &= \frac{\int_0^\tau e^{b(s-\tau)}\left(\nabla f(\bx(s))-\nabla f(\bx(\tau))\right)ds}{\int_0^\tau e^{-bs}ds}. \nonumber
\end{align}
Then, we have
\begin{equation*}
   \frac{\int_0^t e^{a(s-t)}(\nabla f(\bx(t)))^2 ds}{\int_0^t e^{-as}ds} = (\nabla f(\bx(\tau)))^2 + \br_1(\tau),
\end{equation*}
and 
\begin{equation*}
    \frac{\int_0^t e^{b(s-t)}\nabla f(\bx(t)) ds}{\int_0^t e^{-bs}ds}= \nabla f(\bx(\tau)) + \br_2(\tau). 
\end{equation*}
Hence, combining~\eqref{eqn:x_dyn}, we have
\begin{equation*}
    \dot{\bx}(\tau) = -\sqrt{\frac{1}{(\nabla f(\bx(\tau)))^2 + \br_1(\tau)}}(\nabla f(\bx(\tau)) + \br_2(\tau)),
\end{equation*}
and then at $\tau$
\begin{align}
\|\dot{\bx}+\textrm{sign}(\nabla f(\bx))\| & = \left\| \frac{\nabla f(\bx)}{\sqrt{(\nabla f(\bx))^2}}-\sqrt{\frac{1}{(\nabla f(\bx))^2 + \br_1}}(\nabla f(\bx) + \br_2) \right\| \nonumber\\ 
 & \leq \left\| \nabla f(\bx)\left(\sqrt{\frac{1}{(\nabla f(\bx))^2}}-\sqrt{\frac{1}{(\nabla f(\bx))^2+\br_1}}\right) \right\| \nonumber\\
 & \ \ + \left\| \sqrt{\frac{1}{(\nabla f(\bx))^2+\br_1}}(\nabla f(\bx)-(\nabla f(\bx)+\br_2)) \right\|
\end{align}

To estimate the above terms, we first estimate $\br_1$ and $\br_2$. For $\br_2$, by the Lipschitz property of the gradient and~\eqref{eqn:x_speed}, we have
\begin{equation*}
\|\nabla f(\bx(s)) - \nabla f(\bx(\tau))\|\leq L\|\bx(s)-\bx(\tau)\| \leq \frac{2ML\sqrt{d}}{c}|\tau-s| \leq \frac{2ML\sqrt{d}}{c}\tau. 
\end{equation*}
Hence,
\begin{equation}\label{eqn:r2}
    \|\br_2(\tau)\|\leq \frac{2ML\sqrt{d}}{c}\tau.
\end{equation}
For $\br_2$, considering
\begin{equation*}
    (\nabla f(\bx(s)))^2-(\nabla f(\bx(\tau)))^2 = (\nabla f(\bx(s))-(\nabla f(\bx(\tau)))(\nabla f(\bx(s))+(\nabla f(\bx(\tau)))
\end{equation*}
and the upper bound for $\|\nabla f(\bx)\|$, similar to the estimation of $\br_2$ we have
\begin{equation}\label{eqn:r1}
    \|\br_1(\tau)\|\leq \frac{4M^2L\sqrt{d}}{c}\tau.
\end{equation}
By~\eqref{eqn:r2} and~\eqref{eqn:r1}, we have
\begin{align}
&\left\| \nabla f(\bx)\left(\sqrt{\frac{1}{(\nabla f(\bx))^2}}-\sqrt{\frac{1}{(\nabla f(\bx))^2+\br_1}}\right) \right\| \nonumber\\
&\leq M\left\|\frac{\sqrt{(\nabla f(\bx))^2+\br_1} - \sqrt{(\nabla f(\bx))^2}}{\sqrt{(\nabla f(\bx))^2((\nabla f(\bx))^2+\br_1)}}\right\| \nonumber \\
  & = M\left\|\frac{\br_1}{\sqrt{(\nabla f(\bx))^2((\nabla f(\bx))^2+\br_1)}(\sqrt{(\nabla f(\bx))^2+\br_1} + \sqrt{(\nabla f(\bx))^2})}\right\| \nonumber\\
  &\leq \frac{48M^3L\sqrt{d}}{c^4}\tau, \label{eqn:est1}
\end{align}
where the last line is derived by $|[\nabla f(\bx)]_i|\geq\frac{c}{2}$, and  $|[\br_1]_i|<\frac{c^2}{8}$ which comes from $\tau<\frac{c^3}{32M^2L\sqrt{d}}$. On the other hand, we have
\begin{align}\label{eqn:est2}
\left\| \sqrt{\frac{1}{(\nabla f(\bx))^2+\br_1}}(\nabla f(\bx)-(\nabla f(\bx)+\br_2)) \right\| \leq \frac{6ML\sqrt{d}}{c^2}\tau. 
\end{align}
Combining~\eqref{eqn:est1} and~\eqref{eqn:est2}, we have
\begin{align}
\|\dot{\bx}+\textrm{sign}(\nabla f(\bx))\| &\leq \left(\frac{48M^3L\sqrt{d}}{c^4}+\frac{6ML\sqrt{d}}{c^2}\right)\tau \nonumber\\
    &\leq \frac{54M^3L\sqrt{d}}{c^4}\tau,
\end{align}
where the second inequality comes from $c\leq M$. This completes the proof. 


\end{document}